\def\T{{\top}}
\def\phi{{\varphi}}
\def\R{{\mathbb R}}
\def\cal{\mathcal}
\def\cL{{\cal L}}
\def\cP{{\cal P}}
\def\rank{\operatorname{rank}}
\def\a{\alpha}
\def\e{\varepsilon}
\def\s{\sigma}
\theoremstyle{plain}
\newtheorem{theorem}{Theorem}[section]
\newtheorem{lemma}[theorem]{Lemma}
\newtheorem*{theorem*}{Theorem}
\newtheorem*{prop*}{Proposition}
\theoremstyle{definition}
\newtheorem{assumption}[theorem]{Assumption}
\title[]{The Tree-SNE Tree exists}
\author[]{Jack Kendrick}
\begin{document}

\begin{abstract} The clustering and visualisation of high-dimensional data is a ubiquitous task in modern data science. Popular techniques include nonlinear dimensionality reduction methods like t-SNE or UMAP. These methods face the `scale-problem' of clustering: when dealing with the MNIST dataset, do we want to distinguish different digits or do we want to distinguish different ways of writing the digits? The answer is task dependent and depends on scale. We revisit an idea of Robinson \& Pierce-Hoffman that exploits an underlying scaling symmetry in t-SNE to replace 2-dimensional with (2+1)-dimensional embeddings where the additional parameter accounts for scale. This gives rise to the t-SNE tree (short: tree-SNE). We prove that the optimal embedding depends continuously on the scaling parameter for all initial conditions outside a set of measure 0: the tree-SNE tree exists. This idea conceivably extends to other attraction-repulsion methods and is illustrated on several examples.
\end{abstract}

\maketitle

\section{Introduction and Results}\label{sec:intro}

\subsection{The problem of clustering.}
From fields such as computer vision to computational linguistics, large sets of high dimensional data are increasingly commonplace. Thus, it is necessary to develop tools that can be used for their analysis and visualisation. While traditional linear techniques are often remarkably powerful, the visualisation problem poses a new type of challenge. Even if data in 1000 dimensions is intrinsically close to a 5-dimensional submanifold (the manifold hypothesis), embedding five dimensions into two or even three dimensions poses a considerable challenge. One popular dimensionality reduction technique is t-distributed stochastic neighbour embedding (t-SNE), first introduced by van der Maarten and Hinton in \cite{vandermaarten2008tsne}. As a completely non-linear method, t-SNE does not try to produce an isometric or low-distortion embedding. Instead, it tries to group the data into clusters. This works remarkably well in practice and has been very widely used. Indeed, it has become apparent \cite{bohm2022attraction} that a broad range of methods (including Laplacian eigenmaps \cite{belkin2003laplacian}, ForceAtlas2 \cite{jacomy2014forceatlas2}, UMAP \cite{mcinnes2018umap, kobak2019umap} and t-SNE \cite{vandermaarten2008tsne}) can be seen as special cases of a general attraction-repulsion method: we start with points in $\mathbb{R}^2$ that represent the high-dimensional data and then induce each point to move towards other points in $\mathbb{R}^2$ that represent `similar' high-dimensional data. Simultaneously, points are repelled by all of the other data points. If the underlying data is highly clustered, then the same cluster structure in $\mathbb{R}^2$ is a steady state of that dynamical system, see \cite{linderman2017tsne, arora2018analysis, cai2022theoretical}.

\subsection{The problem of scale} Since t-SNE embeddings exhibit cluster structures that are representative of those seen in the high-dimensional data, a typical use case for t-SNE is producing a 2D visualisation that aids with identification of clusters within a dataset. However, the question of scale is often overlooked in this clustering process. Although often treated as a static problem, clustering may be viewed as a dynamic problem with a hierarchical structure. As a motivating example, consider the MNIST dataset which contains 70,000 images of handwritten numerical digits. Typically, these images are clustered based on what digits they represent. However, this clustering is not helpful if we are, for example, trying to distinguish different handwriting styles within the dataset. In this case, a more granular clustering is needed. Since many digits can be written in multiple ways, we expect there to be subclusters corresponding to different handwriting styles within the clusters corresponding to each digit. \Cref{fig:mnist-schematic} gives a schematic of example clusters and subclusters within the MNIST dataset.

\begin{figure}[H]
    \includegraphics[width=0.9\textwidth]{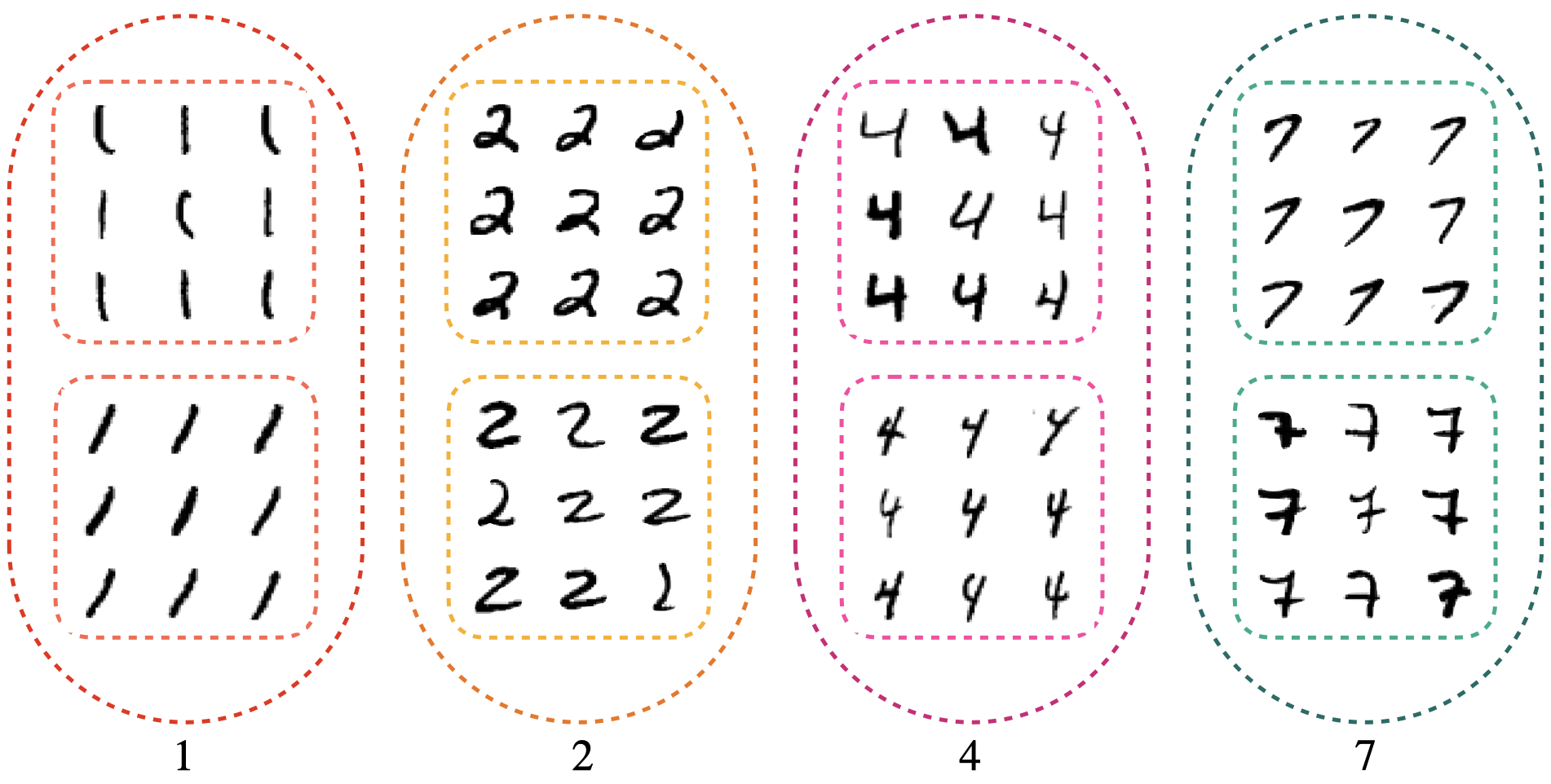}
    \caption{A schematic of subclusters in the MNIST dataset.}
    \label{fig:mnist-schematic}
\end{figure}
One solution to the problem of scale are {\em hierarchical clustering} methods, see \cite{hierarchicalclustering,hierarchicalclustering2} for an overview. These methods are often computationally expensive and so an alternative method is desirable. It has been observed that variants of t-SNE using kernels with heavier tails tend to produce clusters with finer granularity, see for example \cite{kobak2020heavytailed}. However, there is currently no good way to choose a kernel that corresponds to a particular granularity outside of trial and error. Thus, the problem of scale persists within t-SNE embeddings and clusterings. Note that this is not exclusive to t-SNE and is a problem with {\em all} clustering methods yet is often ignored in practice. 

\subsection{Tree-SNE}
To address the problem of scale, we explore a hierarchical variant of t-SNE known as tree-SNE that was introduced in \cite{robinson2020tree-sne}. The idea is to exploit a one-parameter family \cite{kobak2020heavytailed} of kernels that produce t-SNE-type embeddings with finer and finer granularity. We begin with generating an ordinary t-SNE embedding of the data. Then, additional layers are generated by changing the parameter in the kernel and optimizing a certain loss function. Each embedding forms a layer of the tree-SNE embedding and the points in each layer can be interpolated to plot smooth trajectories of each of the data points. \Cref{fig:slicing} contains slices of a tree-SNE embedding of the MNIST dataset, along with the tree that comes from interpolating each layer.

We see in \Cref{fig:slicing} that each of the main branches of the tree splits into smaller branches as the layers increase. These correspond to different ways of writing the same digit. In \Cref{fig:mnist-clusters}, we show example branches and subbranches that correspond to the clusters and subclusters in \Cref{fig:mnist-schematic}. Each branch is labelled with the average image of elements belonging to the cluster.

\begin{figure}[h]
    \includegraphics[width=0.85\textwidth]{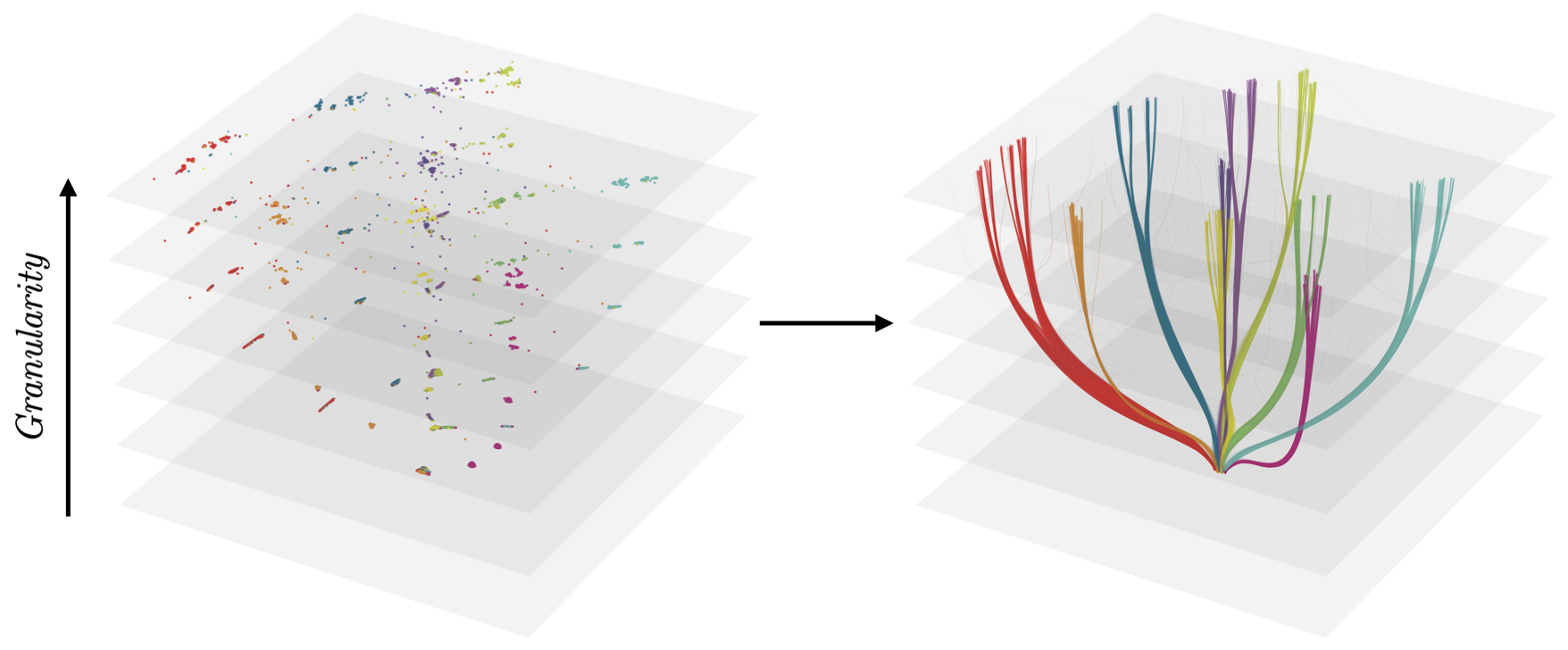}
    \caption{Slices of the MNIST tree-SNE embedding are interpolated to create a tree. Each color corresponds to a digit.}
    \label{fig:slicing}
\end{figure}

In particular, we see that as the layers of the embedding increase there are clusters that reveal structure with finer granularity and there is a continuous structure to the tree: the points follow a continuous trajectory and do not jump around from layer to layer. As has been previously observed in \cite{kobak2020heavytailed}, t-SNE embeddings generated with heavier tailed kernels tend to produce finer clusters and so it is expected that clusters at the top of the tree-SNE embedding are more granular than those at the bottom. However, the continuous structure of the tree is non-trivial and will be the main focus of this paper.

\begin{figure}[h]
    \includegraphics[width=0.6\textwidth]{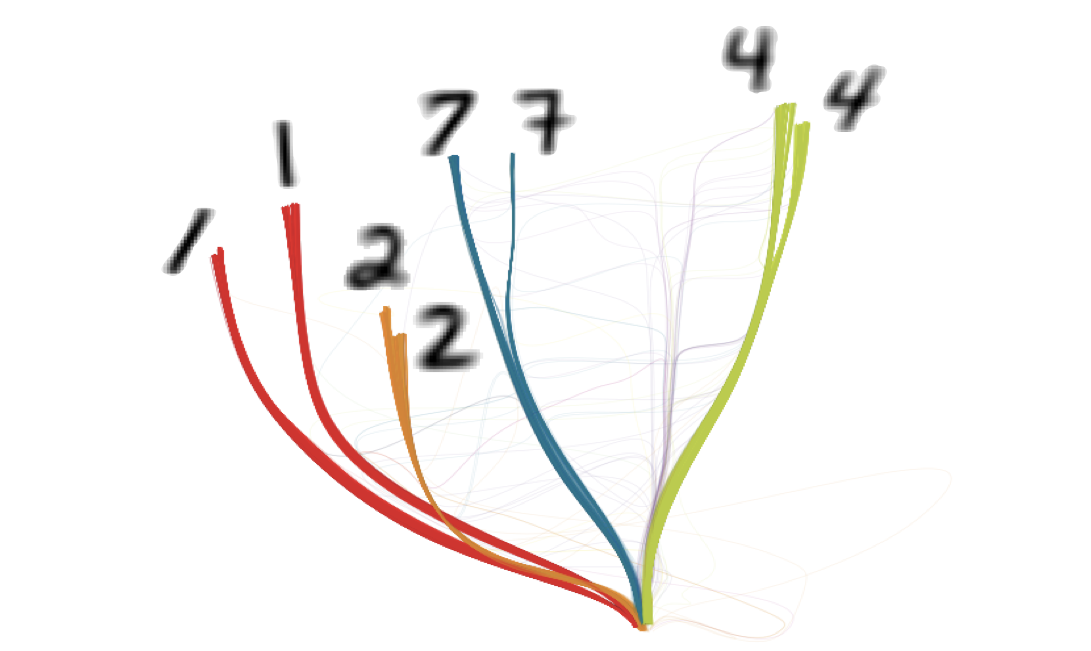}
    \caption{Subbranches of the MNIST tree correspond to different handwriting styles. Images are the average of elements in the corresponding cluster.}
    \label{fig:mnist-clusters}
\end{figure}

\subsection{Main Result}

Our main result concerns the continuous structure of tree-SNE embeddings, as observed in \Cref{fig:slicing}. We prove that this structure is not coincidental and is a consequence of the tree-SNE algorithm's design. Recall that each layer of a tree-SNE embedding is generated using a kernel from a one parameter family. We show that as the parameter varies continuously, so does the embedding.

\begin{theorem*}[The Tree-SNE Tree exists]\label{thm:intro}
    Suppose $(y_1,\ldots, y_n)\in\R^{nd}$ is a \emph{t-SNE} embedding in $\R^d$ with parameter $\a,$ Under mild conditions, for each $\a'$ in some neighbourhood of $\a$ there is a \emph{t-SNE} embedding $(y_1', \ldots, y_n')\in\R^{nd}$ in a neighbourhood of $(y_1, \ldots, y_n).$
\end{theorem*}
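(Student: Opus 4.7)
The plan is to recognise this as a textbook application of the implicit function theorem (IFT) once the t-SNE loss and its symmetries are set up correctly. A t-SNE embedding at parameter $\alpha$ is by definition a critical point of the Kullback--Leibler loss
\[
L(y;\alpha) \;=\; \sum_{i\neq j} p_{ij} \log\!\frac{p_{ij}}{q_{ij}(y;\alpha)},
\qquad y=(y_1,\dots,y_n)\in\R^{nd},
\]
where the $p_{ij}$ are fixed affinities of the high-dimensional data and $q_{ij}(y;\alpha)$ are the normalised kernel values in the embedding space (with the one-parameter family of kernels from \cite{kobak2020heavytailed}). Since these kernels are real-analytic in both $y$ and $\alpha$ (on the open set where no two $y_i$ coincide), the map $F(y,\alpha) := \nabla_y L(y;\alpha)$ is smooth, and an embedding is precisely a zero of $F(\,\cdot\,,\alpha)$.

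The first step of the proof is to write out $F$ and its Jacobian $D_y F = \Hess_y L$ explicitly, checking smoothness and computing the obvious symmetries. The loss is invariant under the Euclidean group acting diagonally on $\R^{nd}$ (simultaneous translations, and in $d\geq 2$ simultaneous rotations), so $\Hess_y L$ has a nontrivial kernel containing the tangent space to this group orbit. The standard fix is to pass to the quotient: fix the centre of mass at the origin and (for $d\geq 2$) fix an orientation, and restrict $F$ to this $(nd-\dim \mathrm{SE}(d))$-dimensional slice. Call the restricted map $\wt F$. The second step is then to apply the IFT to $\wt F(y,\alpha)=0$ at the given embedding: if the restricted Hessian is invertible, IFT produces a smooth curve $\alpha'\mapsto y(\alpha')$ of critical points through the given embedding, which gives the desired nearby t-SNE embedding for each $\alpha'$ close to $\alpha$.

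The third step is to spell out what the ``mild conditions'' mean and to justify the measure-zero claim from the abstract. The IFT hypothesis is exactly non-degeneracy of the reduced Hessian at the embedding, i.e.\ that the critical point is Morse on the quotient. One then argues that this holds generically: consider the full map $\wt F:\R^{nd-\dim\mathrm{SE}(d)}\times\R\to\R^{nd-\dim\mathrm{SE}(d)}$ and apply Sard's theorem to the projection of the critical set $\wt F^{-1}(0)$ onto the $\alpha$-axis. The set of $\alpha$ for which some critical point is degenerate is the set of critical values of this projection and hence has measure zero; equivalently, for a generic initial condition of the optimisation, the limit point is a non-degenerate critical point, and IFT applies.

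The main obstacle I anticipate is not the IFT step itself, which is routine, but the careful treatment of symmetries and the justification of genericity. One must identify precisely which degeneracies are forced by symmetry (and are therefore harmless after quotienting) versus which are accidental, and then verify that the accidental ones occur only on a measure-zero set of parameters $\alpha$. A secondary technical point is to rule out critical points at the boundary of the configuration space (i.e.\ those with $y_i = y_j$ for some $i\neq j$), where the kernels and hence $L$ fail to be smooth; under standard t-SNE regularity this is excluded because the repulsion term diverges as points collide.
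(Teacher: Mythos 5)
Your core step is sound and is essentially a symmetry-reduced reformulation of what the paper does, but via a different theorem: you fix the Euclidean symmetry by restricting to a slice (centre of mass, orientation) and apply the ordinary implicit function theorem to the reduced gradient map, with invertibility of the reduced Hessian as the ``mild condition''; the paper instead works on all of $\R^{1+nd}$, keeps the $d(d+1)/2$-dimensional degeneracy forced by rigid motions, and applies the constant-rank level-set theorem under the assumption that $\nabla^2\cL$ attains the maximal possible rank $nd-d(d+1)/2$. The two non-degeneracy conditions are equivalent at a configuration with trivial stabiliser, and your route buys an actual local parametrisation $\a'\mapsto y(\a')$ on the slice, while the paper's route avoids constructing a slice for the rotation group but then has to argue separately that $\a$ is non-constant on the local solution manifold. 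Either way the statement follows.

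Two points need repair, however. First, you treat the affinities $p_{ij}$ as fixed, but in tree-SNE the perplexity $\cP(\a)$, and hence the bandwidths $\s_i$ and the $p_{ij}$, vary with $\a$; smoothness of $F$ in $\a$ therefore requires showing that the $\s_i$ depend smoothly on $\a$, which the paper does by a separate implicit-function-theorem argument applied to the perplexity equations (its Assumption \ref{ass:perp} together with generic non-vanishing of the $\s_i$-derivative of the perplexity). Without this, your map $F$ is not even well defined as a function of $\a$ in the setting the theorem addresses. Second, your genericity argument via Sard is circular as stated: to apply Sard to the projection of $\wt F^{-1}(0)$ onto the $\a$-axis you first need $\wt F^{-1}(0)$ to be a manifold, i.e.\ $0$ to be a regular value of the joint map in $(y,\a)$, which is precisely the kind of non-degeneracy you are trying to establish; moreover, genericity in $\a$ is not what is claimed --- the paper's claim is genericity in the embedding/initialisation and data. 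The paper handles this by assuming the maximal rank is attained at one embedding, invoking analyticity of the maximal minors of $\nabla^2\cL$ (so their common zero set has measure zero) together with lower semicontinuity of rank to get genericity in $y$, and showing that a random perturbation of the $p_{ij}$ restores the rank assumption almost surely. If you keep invertibility of the reduced Hessian as an explicit hypothesis (the ``mild condition''), your argument is fine; the Sard step as written does not justify it.
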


To be more specific, we view the point $(\a, y_1, \ldots, y_n)\in\R^{1+nd}$ as belonging to the zero set of the function $F.$ The function $F$ is the gradient of the loss function minimized to find t-SNE embeddings when the parameter $\a$ is not treated as a constant. The `mild conditions' referred to in our theorem ensure that $F$ is smooth and generically has a constant rank. This will allow us to conclude that the zero set of $F$ is generically a smooth manifold of a dimension $1+ d(d+1)/2$ around $(\a, y_1, \ldots, y_n),$ with one of these dimensions accounting for changes in the parameter $\a.$ These `mild conditions' are necessary but are not restrictive. Indeed, they are satisfied by generic datasets and initializations. We provide a detailed discussion of these conditions in \Cref{sec:proof}.

\subsection*{Organisation.}

In \Cref{sec:prelims}, we recount the necessary background information on t-SNE and describe the tree-SNE algorithm. In \Cref{sec:proof}, we prove our main theorem to show that the tree-SNE algorithm indeed produces tree-like embeddings for generic datasets and generic initialisations. Finally, in \Cref{sec:examples} we use tree-SNE to visualise two different datasets from the HathiTrust library and examine the structures revealed in the embeddings.

\section{Preliminaries}\label{sec:prelims}

In this section, we give a brief overview of both t-SNE and tree-SNE. Throughout this section, we fix a dataset $\{x_1, \ldots, x_n\}\subset\R^D,$ where on may think of the dimension $D$ as large. We begin by describing dimensionality reduction and data visualisation with t-SNE.

\subsection{T-SNE}
Given a dataset $\{x_1, \ldots, x_n\}\subset\R^D,$ the central goal of t-SNE is to find a low-dimensional set of points $\{y_1, \ldots, y_n\}\subset\R^d$ that preserves the overall structure of the data. In particular, if $x_i$ and $x_j$ are ``close'' in $\R^D,$ then $y_i$ and $y_j$ should be ``close'' in $\R^d$ with high probability. Typically, the dimension $d$ is chosen such that $d=2$ or $d=3$ and so the t-SNE embedding $\{y_1, \ldots, y_n\}$ can be used to visualise the high-dimensonal dataset.
The t-SNE method consists of two stages. First, {\em affinities} $p_{ij}$ are determined between each pair of points $x_i$ and $x_j.$ The affinity $p_{ij}$ between the points $x_i$ and $x_j$ is a probability that represents the similarity of $x_i$ and $x_j$ within the dataset: a high affinity is associated to pairs that are close together, whereas low affinities are assigned to pairs of points that are far apart. The affinity $p_{ij}$ is determined using {\em directional affinities} $p_{j|i}$ and $p_{i|j}.$ For $j\neq i,$ the directional affinity $p_{j|i}$ is defined as
\begin{align*}
    p_{j|i} &= \frac{\exp(-\|x_i-x_j\|^2/\s_i^2)}{\sum_{k\neq i}\exp(-\|x_i-x_k\|^2/\s_i^2)},
\end{align*} 
where the variance $\s_i^2$ is a user determined parameter. The directional affinity between $x_i$ and itself is set to $p_{i|i}=0$ for all $i.$ Note that for each value of $i,$ the directional affinities determine a probability distribution on $\{x_1, \ldots, x_n\}.$ The variance $\s_i^2$ is chosen such that the {\em perplexity} of this distribution, defined
\begin{equation*}
    \cP = \exp\left(-\log 2 \cdot\sum_{j\neq i}p_{j|i}\log_2 p_{j|i}\right),
\end{equation*}
has some pre-specified value. Typical values of the perplexity $\cP$ are between 5 and 50. For each pair $i, j$ the affinity $p_{ij}$ is the normalised average of the directional affinities $p_{j|i}$ and $p_{i|j},$
\begin{align*}
    p_{ij} = \frac{p_{j|i}+p_{i|j}}{2n}.
\end{align*}

Once affinities between all pairs of points $x_i$ and $x_j$ have been determined, a low dimensional embedding of the data $\{y_1, \ldots, y_n\}\subset\R^d$ is found. For each pair of points $y_i, y_j\in\R^d$ the affinity $q_{ij}$ is defined
\begin{align*}
    q_{ij} &= \frac{K(\|y_i-y_j\|)}{\sum_{k\neq \ell}K(\|y_k-y_\ell\|)}
\end{align*}
where $K$ is a kernel. Traditionally, the kernel $K$ corresponds to the t-distribution with one degree of freedom, $K(d) = (1+d^2)^{-1}.$ However, more general t-distributions can be considered. As in \cite{kobak2020heavytailed}, we consider kernels given by 
\begin{align*}
    K_\a(d) &= \frac{1}{(1+d^{2/\a})^\a}
\end{align*}
where $\a>0$ is a parameter. This corresponds to a scaled t-distribution with $2\a-1$ degrees of freedom. Note that choosing $\a=1$ yields the standard t-distribution kernel and the taking the limit as $\a\to\infty$ gives the Gaussian kernel $K_\infty(d) = \exp(-d^2).$ This corresponds to the SNE method of Hinton and Roweis \cite{hinton2002sne}. Moreover, $\a < 0.5$ corresponds to a t-distribution with negative degrees of freedom and so represents a distribution with heavier tails than any (non-scaled) t-distribution. Note that when $\a<0.5$ the resulting distribution is no longer a probability distribution. A common misconception about the t-SNE method is that the kernel must correspond to a probability distribution, however this is not correct. Since we only consider a finite number of points, distributions may always be renormalized and it is not, for instance, necessary for any particular integral to be finite.
As in the high-dimensional space, the affinity between $y_i$ and itself is set as $q_{ii}=0.$ The embedding $\{y_1, \ldots, y_n\}$ is chosen so that the Kullbeck-Leibler (KL) divergence
\begin{equation*}
    \cL(y_1, \ldots, y_n) = \sum_{i\neq j} p_{ij}\log\frac{p_{ij}}{q_{ij}}
\end{equation*}
is minimized. The gradient of $\cL$ is given by
\begin{equation*}
    \frac{\partial\cL}{\partial y_i} = \sum_{j=1}^n (p_{ij}-q_{ij})K_\a(\|y_i-y_j\|)^{1/\a}(y_i-y_j) 
\end{equation*}
and so an embedding that minimizes the loss function may be found using a random initialisation and applying first order methods such as gradient descent.

\subsection{Tree-SNE} \label{sec:prelims-tree}
We now describe how tree-SNE builds on t-SNE to generate collections of embeddings that reveal finer substructures within data. The central idea of tree-SNE is that iteratively decreasing the degrees of freedom in the t-distribution kernel yields embeddings that capture the structure of data in increasing granularity. This sequence of embeddings can then be stacked atop one another and interpolated to reveal a tree-like structure with different branches corresponding to different subclusters in the data.
Tree-SNE takes in a sequence of parameters $\{a^{(i)}\}_{i=1}^m$ and produces a sequence of embeddings $\{(y_1^{(i)}, \ldots, y_n^{(i)})\}_{i=1}^m.$ As with t-SNE, each embedding is produced in two stages. First, affinities between each pair $x_i, x_j$ of data points in $\R^D$ are calculated so that the directional affinities at each point have some perplexity $\cP^{(i)}$ that is dependent on $\a^{(i)}.$ Then, the embedding $(y_1^{(i)}, \ldots, y_n^{(i)})$ is a minimizer of the loss function $\cL.$ Each embedding $(y_1^{(i)}, \ldots, y_n^{(n)})$ forms a layer of the tree-SNE embedding. The parameters $\{\a^{(i)}\}_{i=1}^m$ are chosen such that $\a^{(1)}=1$ and $\a^{(i+1)}<\a^{(i)}$ for all $i.$ Similarly, the perplexities $\cP^{(i)}$ decrease as the layer of the embedding increases. Thus, as the layers of the tree-SNE embedding increase, the t-distribution has heavier and heavier tails and this leads to finer structures of the data being revealed. 

Typically, t-SNE embeddings are found using a random initialisation. However, since the loss function $\cL$ is highly non-convex, different random initialisations lead to different embeddings. To ensure that the tree-SNE embedding has a continuous structure from one layer to the next, we use the embedding $\{y_1^{(i)}, \ldots, y_n^{(i)}\}$ as the initialisation for layer $i+1.$ Assuming that the difference between $\a^{(i)}$ and $\a^{(i+1)}$ is small, we expect that if $\{y_1^{(i)}, \ldots, y_n^{(i)}\}$ minimizes $\cL$ with parameter $\a^{(i)}$ then there should be a minimizer for $\cL$ with parameter $\a^{(i+1)}$ in some small neighbourhood of $\{y_1^{(i)}, \ldots, y_n^{(i)}\}.$ We prove that this is the case in \Cref{sec:proof}.
Note that the random initialisation of the t-SNE embedding in the first layer may serve as an intrinsic sanity check: since different initialisations produce different trees, any structures that persist across multiple initialisations is likely to be trustworthy. This idea is used in the work \cite{greengard2020factor}.

\section{Proof of Continuous Structure}\label{sec:proof}

In this section, we prove that, under mild assumptions, the tree-SNE method described in \Cref{sec:prelims-tree} produces a sequence of embeddings with a continuous structure. Although tree-SNE is a discrete process with a finite set of parameters producing a finite set of embeddings, we consider a continuous version of the problem and treat the embedding $y_1(\a), \ldots, y_n(\a)$ as a function of the parameter $\a$ which may take any value in $(0, 1].$ Similarly, the perplexity $\cP(\a)$ is also a function of the parameter $\a.$ Our goal is to prove that each function $y_i(\a)$ is continuous. 

We consider the map $F:\R^{1+nd}\to\R^{nd}$ defined by
\begin{equation*}
    F(\a, y_1, \ldots, y_n) = \left(\frac{\partial\cL}{\partial y_1}, \ldots, \frac{\partial\cL}{\partial y_n}\right).
\end{equation*}

Since the embedding $y_1(\a), \ldots, y_n(\a)$ is a minimizer of $\cL,$ in particular the condition $\nabla\cL = 0$ is satisfied and so every embedding lies on the zero-set of $F.$ As the function $\cL$ is not convex, it is not true that every point in the zero-set of $F$ corresponds to a minimizer of $\cL.$ However, given an appropriate choice of parameters such as step size, first order methods like gradient descent return the point in the zero-set of $F$ closest to their initialisation. We restate a refined version of our main result in the following theorem.

\begin{theorem*}[The Tree-SNE tree exists - refined]\label{thm:main}
    Suppose $(\a, y_1, \ldots, y_n)$ is a generic point in the zero set of $F.$ Then, the zero set of $F$ is a smooth, properly embedded, $1+d(d+1)/2$ dimensional submanifold of $\R^{1+nd}$  $(\a, y_1, \ldots, y_n).$ Moreover, in this neighbourhood the parameter $\a$ is not constant.
\end{theorem*}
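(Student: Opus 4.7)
The strategy is to identify the large symmetry of $\cL$, pass to a local quotient, and apply the implicit function theorem. Since $\cL$ depends on $(y_1, \ldots, y_n)$ only through the pairwise distances $\|y_i - y_j\|$, it is invariant under the diagonal action of the Euclidean group $E(d) = \R^d \rtimes O(d)$, which has dimension $d + d(d-1)/2 = d(d+1)/2$. Away from the thin set of configurations with coincident embedding points or nontrivial stabilizer, this action is free and proper, and the local quotient $\R^{nd}/E(d)$ is a smooth manifold of dimension $nd - d(d+1)/2$. The loss descends to a smooth function $\bar\cL(\a, [y])$ on $\R \times (\R^{nd}/E(d))$, and $F = \nabla_y \cL$ descends to $\bar F = \nabla_{[y]} \bar\cL$, whose zero set is in bijection with the $E(d)$-orbits of zeros of $F$.

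The substance of the ``mild conditions'' at a generic point $(\a_0, y_0) \in F^{-1}(0)$ is Morseness on the quotient: the Hessian $D^2_{[y]} \bar\cL_{\a_0}$ at $[y_0]$ is nondegenerate. Equivalently, $\ker \partial_y F(\a_0, y_0)$ equals the $d(d+1)/2$-dimensional space of infinitesimal Euclidean symmetries, with no extra degenerate directions beyond those forced by the symmetry. Granting this, $\bar F(\a, [y]) = 0$ is a square system of $nd - d(d+1)/2$ equations in $nd - d(d+1)/2$ unknowns with invertible Jacobian in $[y]$, so the implicit function theorem produces a unique smooth curve $\a \mapsto [y(\a)]$ of critical orbits parametrized by $\a$ in a neighborhood of $\a_0$.

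Pulling this curve back to $\R^{1+nd}$ along the $E(d)$-orbit map shows that $F^{-1}(0)$ near $(\a_0, y_0)$ is locally diffeomorphic to a neighborhood of $\id$ in $E(d)$ crossed with the curve, hence a smooth, properly embedded submanifold of dimension $d(d+1)/2 + 1 = 1 + d(d+1)/2$. Since $\a$ parametrizes the curve factor by construction, the projection to the $\a$-axis is non-constant on every open neighborhood of $(\a_0, y_0)$ in the zero set, giving the second assertion. An equivalent linear-algebra repackaging works directly in $\R^{1+nd}$: one checks that at a zero of $F$ the relations $F \cdot \xi \equiv 0$ force both $\im \partial_y F \subseteq V_y^\perp$ and $\partial_\a F \in V_y^\perp$, so $\im DF = V_y^\perp$ and $\rank DF = nd - d(d+1)/2$ under the Morse hypothesis.

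The main obstacle will be making ``generic'' precise. The map $F$ is real-analytic on the open set $\{\a > 0,\ y_i \neq y_j \text{ for all } i \neq j\}$, so each condition imposed (free Euclidean action, Morseness on the quotient) is an open condition whose failure locus is a proper real-analytic subvariety of the ambient space, hence of measure zero. Verifying this density, together with the careful construction of the local quotient $\R^{nd}/E(d)$ via a slice transverse to the orbit of $y_0$ and the attendant check that $F$ descends smoothly, is where the technical work lies and justifies the paper's characterization of the hypotheses as mild and non-restrictive.
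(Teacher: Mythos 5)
Your structural argument is sound and is, at its core, the same argument as the paper's: the paper works directly in $\R^{1+nd}$ and applies the constant-rank level-set theorem to $F$, using the rigid-motion invariance to pin $\rank\nabla F$ at $nd-d(d+1)/2$, while you quotient by the diagonal $E(d)$-action and apply the implicit function theorem to the descended gradient, then saturate the resulting curve $\a\mapsto[y(\a)]$ by the group orbit. These are two packagings of the same Morse--Bott-type hypothesis (kernel of $\nabla^2\cL$ equals the infinitesimal rigid motions, which is the paper's Assumption (C) plus its genericity lemma), and your ``linear-algebra repackaging'' at the end is precisely the paper's route. Your quotient picture buys a cleaner statement (a genuine curve of critical orbits, which makes the non-constancy of $\a$ immediate), at the cost of having to construct the slice and check that everything descends; the paper's constant-rank argument avoids the quotient entirely and gets the non-constancy of $\a$ by a short contradiction: if $\a$ were locally constant on the $\bigl(1+d(d+1)/2\bigr)$-dimensional zero set, the Hessian $\nabla^2\cL$ would acquire a null direction beyond the rigid motions, contradicting the rank assumption.

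There are, however, two genuine gaps. First, you never address why $F$ is smooth in $\a$. The $\a$-dependence enters not only through the kernel $K_\a$ but, crucially, through the affinities $p_{ij}$, because in tree-SNE the perplexity $\cP(\a)$ varies with $\a$ and the bandwidths $\s_i$ are recalibrated by solving the perplexity equation; they are only \emph{implicitly} defined functions of $\a$. The paper spends half of its key lemma on exactly this point: it assumes $\cP(\a)$ is smooth, forms the map $G(\a,\s_1,\ldots,\s_n)$ given by the perplexity constraints, and invokes the implicit function theorem under a generic nondegeneracy of $\partial/\partial\s_i$ of the perplexity. Your claim that $F$ is real-analytic on $\{\a>0,\ y_i\neq y_j\}$ is unjustified in the joint variable $(\a,y)$ --- under the stated hypotheses $\cP(\a)$ is merely smooth, and $\s_i(\a)$ is implicit --- so the part of your genericity mechanism that relies on analyticity in $\a$ does not stand as written (analyticity in $y$ for fixed $\a$ is fine and is what the paper uses). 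Second, your assertion that the failure locus of the Morse condition is a \emph{proper} analytic subvariety, hence measure zero, requires a witness point where the condition holds; without one the ``subvariety'' could be everything. This is exactly the paper's Assumption (C), and it is not vacuous: the paper exhibits an equilateral-triangle configuration where the rank of $\nabla^2\cL$ drops below $nd-d(d+1)/2$, and adds a separate lemma showing that a random perturbation of the affinities restores the assumption almost surely. You flag ``making generic precise'' as remaining work, but the existence hypothesis is a substantive input, not a routine verification, and your argument needs it stated explicitly.
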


We argue that this is sufficient to prove the contuous structure of a tree-SNE embedding. Indeed, fix a parameter $\a$ and suppose that $(y_1, \ldots, y_n)$ is the corresponding layer of the tree-SNE embedding. The result of \Cref{thm:main} tells us that for each $\a'$ in some neighbourhood of $\a$ there exists a set of points $(y_1', \ldots, y_n')$ in a neighbourhood of $(y_1, \ldots, y_n)$ so that $(\a', y_1', \ldots, y_n')$ is in the zero set of $F.$ So, by initializing the $\a'$ layer of the tree-SNE embedding at $(y_1, \ldots, y_n),$ gradient descent will return the embedding $(y_1', \ldots, y_n'),$ or some other set of points in the neighbourhood of $(y_1, \ldots, y_n).$ It follows that, when the parameters $\{a^{(i)}\}_{i=1}^m$ are chosen appropriately and each layer is initialized using the preceding layer, the tree-SNE embedding will have a continuous structure. The main tool that we use is the {\em constant-rank level-set theorem}. This is a standard result in the theory of manifolds. For details, see for example \cite{lee2013ism}.

\begin{theorem}[Constant-Rank Level Set]
    Let $M$ and $N$ be smooth manifolds, and $\phi:M\to N$ a smooth map with constant rank $r$ in some neighbourhood of $p.$ Each level set of $\phi$ is locally a properly embedded submanifold of codimension $r$ in $M.$
\end{theorem}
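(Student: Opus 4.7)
The plan is to deduce the Constant-Rank Level Set Theorem from the Rank Theorem (also called the Canonical Form Theorem for constant-rank maps), which produces local coordinates in which $\phi$ becomes a linear projection. Once this normal form is in hand, the level sets of $\phi$ are literally coordinate slices, and the conclusion follows immediately from the definition of an embedded submanifold in terms of slice charts.

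More concretely, first I would fix $p \in M$, set $m = \dim M$, $n = \dim N$, and let $q = \phi(p)$. Since $\phi$ has constant rank $r$ on a neighbourhood $U$ of $p$, the Rank Theorem supplies smooth charts $(U', \psi)$ centred at $p$ with $\psi(U') \subset \R^m$ and $(V, \eta)$ centred at $q$ with $\eta(V) \subset \R^n$ such that, in these coordinates, the coordinate representation of $\phi$ is
\begin{equation*}
    \widehat\phi(x_1, \ldots, x_r, x_{r+1}, \ldots, x_m) = (x_1, \ldots, x_r, 0, \ldots, 0).
\end{equation*}
Shrinking $U'$ if necessary, we may assume $\psi(U')$ is a product open set of the form $W_1 \times W_2 \subset \R^r \times \R^{m-r}$ and that $\phi(U') \subset V$.

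Next I would unpack the level set. Let $c = \phi(p') \in V$ for any $p' \in U'$; after translating coordinates on $V$, write $\eta(c) = (c_1, \ldots, c_r, c_{r+1}, \ldots, c_n)$. Because $\widehat\phi$ has its last $n-r$ components identically zero, a point $x \in \psi(U')$ satisfies $\widehat\phi(x) = \eta(c)$ if and only if $c_{r+1} = \cdots = c_n = 0$ and $(x_1, \ldots, x_r) = (c_1, \ldots, c_r)$. Hence $\phi^{-1}(c) \cap U'$ is either empty or, in the chart $\psi$, is the coordinate slice $\{c_1\} \times \cdots \times \{c_r\} \times W_2$. By the standard characterisation of embedded submanifolds via slice charts, this exhibits $\phi^{-1}(c) \cap U'$ as an embedded submanifold of $U'$ of codimension $r$, smoothly parametrised by the last $m-r$ coordinates.

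The only remaining point is properness (as a topological embedding), and here the key observation is that the slice $\{c_1\} \times \cdots \times \{c_r\} \times W_2$ is closed in $\psi(U')$, so $\phi^{-1}(c) \cap U'$ is closed in $U'$; this is precisely the local properness statement asserted in the theorem. The main technical input on which the whole argument leans is the Rank Theorem, which is the substantive ingredient that converts the constant-rank hypothesis into a local normal form via repeated use of the Inverse Function Theorem; I would treat this as a black box and cite \cite{lee2013ism}, since reproving it here would be the only genuinely nontrivial step and is well outside the scope of this paper.
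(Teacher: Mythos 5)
Your proof is correct and is precisely the standard argument: the paper itself offers no proof of this statement, deferring entirely to \cite{lee2013ism}, and the deduction you give (Rank Theorem normal form, level sets become coordinate slices, closedness of the slice gives local properness) is exactly the proof found in that reference. Nothing is missing; treating the Rank Theorem as the cited black box is the right call here.
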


To prove \Cref{thm:main}, we will show that $F$ is smooth and generically has rank $nd-d(d+1)/2.$ Then, using the constant rank level-set theorem, we show that at a generic point, the zero set of $F$ is locally a properly embedded submanifold of $\R^{1+nd}$ of dimension $1+d(d+1)/2$ where $\a$ is not constant. First, we fix the following set of assumptions.

\begin{assumption}\label{ass:setting} 
    We assume the following conditions are satisfied.
    \begin{enumerate}[label=(\Alph*)]
        \item \label{ass:data} The dataset $\{x_1, \ldots, x_n\}\subset\R^D$ is fixed.
        \item \label{ass:perp}The perplexity $\cP(\a)$ is a smooth function of $\a$
        \item \label{ass:rank} Given the affinities $p_{ij}$ calculated using perplexity $\cP(\a),$  there is at least one set of points $y_1, \ldots, y_n\in\R^d$ such that $\nabla^2\cL$ has rank $nd- d(d+1)/2.$
    \end{enumerate}
\end{assumption}

Assumptions \ref{ass:data} and \ref{ass:perp} will ensure that the map $F$ is smooth, whereas Assumption \ref{ass:rank} will ensure that $F$ generically has rank $nd-d(d+1)/2.$ This is the key to ensuring that the constant-rank level set theorem applies in our setting.

\subsection{A Discussion of Assumption \ref{ass:rank}} Note that since the loss function $\cL$ is defined using pairwise distances, it is invariant under all rigid transformations of $\R^d.$ It follows that the rank of $\nabla\cL^2$ is at most $nd-d(d+1)/2$ since the space of all rigid transformations is $d(d+1)/2$ dimensional. Since this is the only invariance exhibited by the function, we expect the Hessian $\nabla\cL^2$ to achieve this rank at any generic embedding $y_1, \ldots, y_n.$ While Assumption \ref{ass:rank} is generically satisfied, it is indeed possible for the Hessian $\nabla^2\cL$ to have rank lower than $nd-d(d+1)/2,$ as shown in the following proposition.

\begin{prop*}[Rank Deficient Hessian]
    Let $n=3$ and $d=2.$ Suppose that the set $\{y_1, y_2, y_3\}\subset\R^2$ is the vertex set of an equilateral triangle with side length $r.$ Then, the rank of $\nabla^2\cL$ is at most $nd - d(d+1)/2 - 1.$
\end{prop*}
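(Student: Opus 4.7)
The plan is to identify one additional infinitesimal symmetry of $\cL$ at the equilateral configuration beyond the standard $d(d+1)/2$-dimensional rigid-motion invariance, producing a fourth direction in $\ker \nabla^2 \cL$ and thereby dropping the rank by one. First I would place the triangle so that its centroid lies at the origin. At this configuration all three pairwise distances equal $r$, so the normalization is $Z = 6K(r)$ and every $q_{ij} = K(r)/Z = 1/6$, independent of $r$. Uniform rescaling $(y_1,y_2,y_3) \mapsto (\lambda y_1, \lambda y_2, \lambda y_3)$ produces another equilateral triangle with the same $q_{ij} = 1/6$, so $\cL(\lambda y) = \cL(y)$ for every $\lambda > 0$.

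Combining this scaling invariance at equilateral with the $3$-dimensional group of rigid motions of $\R^2$, the set $M$ of all equilateral triangles (of arbitrary size, translation and rotation) is a smooth $4$-dimensional submanifold of $\R^6$ on which $\cL$ is constant. The goal then becomes to show that $T_pM \subseteq \ker \nabla^2 \cL$ at the given point $p$, which immediately yields $\rk \nabla^2 \cL \leq 6 - 4 = 2 = nd - d(d+1)/2 - 1$.

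For the three rigid-motion directions the argument is standard: translation generators are constant vector fields and hence automatically in $\ker \nabla^2 \cL$, while differentiating the identity $\nabla \cL(y)\cdot (Ay) \equiv 0$ (rotation invariance) yields $\nabla^2 \cL(y)\,(Ay) = A\,\nabla \cL(y)$, which vanishes at any critical point. For the scaling direction $y$ itself, I would decompose $\cL = \tilde\cL \circ \Phi$ with $\Phi(y) = (d_{12}, d_{13}, d_{23})$ and apply the chain rule
\[
\nabla^2 \cL = (D\Phi)^{T} H_d\,(D\Phi) + \sum_i \tfrac{\partial \tilde\cL}{\partial d_i}\, D^2 d_i,
\]
where $H_d = \nabla^2 \tilde\cL$. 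Because each $d_{ij}$ is homogeneous of degree one in $y$, Euler's identity gives $(D\Phi)\, y = (d_{12}, d_{13}, d_{23}) = r\mathbf{1}$ and $D^2 d_{ij}\cdot y = 0$, whence $\nabla^2 \cL\cdot y = r\,(D\Phi)^{T} H_d\mathbf{1}$. An explicit computation of $H_d$ from $\tilde\cL(d_1,d_2,d_3) = -2\sum_i p_i \log K(d_i) + \log\bigl(2\sum_i K(d_i)\bigr) + \text{const}$ shows that the equations $\nabla\tilde\cL = 0$ at an equilateral critical point force $p_{ij} = 1/6$ for all pairs (provided $K'(r)\neq 0$), and under this symmetry the row sums of $H_d$ vanish, giving $H_d\mathbf{1}=0$ and hence $\nabla^2 \cL\cdot y = 0$.

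The main obstacle is precisely this last step --- verifying that the scaling direction sits in the kernel of $\nabla^2 \cL$ rather than merely being isotropic for the quadratic form. The chain-rule decomposition handles this cleanly because homogeneity annihilates the first-derivative correction $\sum_i (\partial_i\tilde\cL) D^2 d_i\cdot y$ identically, reducing the problem to the purely algebraic identity $H_d\mathbf{1} = 0$ for symmetric affinities, which in turn is forced by the gradient vanishing at an equilateral critical point.
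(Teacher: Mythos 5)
Your proposal rests on the same key observation as the paper's proof --- at an equilateral configuration every $q_{ij}=1/6$ independently of the side length, so $\cL$ has a scaling invariance beyond the rigid motions --- but where the paper stops at the soft claim that ``any additional invariance of $\cL$ causes the rank of the Hessian to drop,'' you try to exhibit four explicit kernel directions. Your computations are correct as far as they go (the chain-rule/Euler reduction $\nabla^2\cL\cdot y = r\,(D\Phi)^{\T}H_d\mathbf{1}$ is right, and the row sums of $H_d$ at an equilateral point are indeed proportional to $\tfrac13-2p_{ij}$), but both of your kernel verifications invoke a hypothesis that the proposition does not contain: that the configuration is a critical point of $\cL$. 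The rotation direction satisfies $\nabla^2\cL\,(Ay)=\diag(A,A,A)\nabla\cL(y)$, which vanishes only when $\nabla\cL(y)=0$; and $H_d\mathbf{1}=0$ only when all $p_{ij}=1/6$, a condition you extract precisely from $\nabla\tilde\cL=0$ at an equilateral critical point. The statement, however, is asserted for an arbitrary equilateral configuration with arbitrary affinities $p_{ij}$, so what you prove is a restricted version of the claim. Indeed your own formulas show that this restriction is intrinsic to the ``kernel-directions'' route: if the $p_{ij}$ are not all $1/6$ and the point is not critical, then neither the scaling direction nor the rotation direction lies in $\Ker\nabla^2\cL$, so four independent kernel vectors cannot be produced this way.

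That said, the gap is one you share with the paper rather than one the paper avoids: constancy of $\cL$ along the four-dimensional family of equilateral triangles forces degeneracy of $\nabla^2\cL$ only at points where $\nabla\cL$ vanishes (a function constant on a submanifold can have a full-rank Hessian at a non-critical point of that submanifold), and the paper's one-line invariance argument is silent on this. So the honest comparison is: same idea, but your version makes explicit the extra input (criticality, hence $p_{ij}=1/6$) that is needed to turn the scaling symmetry into an actual rank drop, and within that added hypothesis your argument is complete and more rigorous than the paper's. To match the statement as written you would either have to add the critical-point hypothesis to the proposition, or find an argument that does not go through identifying kernel vectors of $\nabla^2\cL$ at non-critical configurations --- and the analysis above suggests no such argument exists for arbitrary affinities.
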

\begin{proof}
     Recall that the rank of $\nabla^2\cL$ is at most $nd-d(d+1)/2$ since $\cL$ is invariant under all rigid transformations of $\R^d.$ Note that any additional invariance of $\cL$ causes the rank of the Hessian $\nabla^2\cL$ to drop. Since the set $\{y_1, y_2, y_3\}\subset\R^2$ is the vertex set of an equilateral triangle, in particular we have that for each choice of $i\neq j,$ the distance between $y_i$ and $y_j$ is $\|y_i-j_y\| = r.$ Note that each of the probabilities $q_{ij}$ is given by 
    $$q_{ij} = \frac{K(r)}{\sum_{k\neq \ell}K(r)}= \frac{1}{6}$$
    and so is independent of the side length $r.$ It follows that $\cL(y_1, y_2, y_3)$ is invariant under scalings of the side length $r.$ This is not a rigid transformation of $\R^2$ and so the rank of $\nabla^2\cL$ is at most $nd - d(d+1)/2 - 1.$
\end{proof}

Although Assumption \ref{ass:rank} only requires that the desired rank be achieved for one set of points, if this condition is satisfied, the Hessian at any generic embedding $y_1, \ldots, y_n$ will have rank $nd-d(d+1)/2$ due to the lower semicontinuity of rank. This the content of the following lemma.

\begin{lemma}
    Suppose Assumption \ref{ass:rank} is satisfied. Then, for any generic set of points $y_1, \ldots, y_n$ the Hessian $\nabla^2\cL$ has rank $nd-d(d+1)/2.$
\end{lemma}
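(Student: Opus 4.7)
The plan is to combine lower semicontinuity of matrix rank with real-analyticity of the Hessian as a function of the embedding. First I would observe that on the open set $U \subset \R^{nd}$ consisting of configurations $(y_1,\ldots,y_n)$ with pairwise distinct points, the loss $\cL$ is a real-analytic function of $(y_1,\ldots,y_n)$: the squared distances $\|y_i - y_j\|^2$ are polynomial and strictly positive on $U$, and the map $t \mapsto (1 + t^{1/\a})^{-\a}$ is real-analytic on $(0,\infty)$, so each term $K_\a(\|y_i - y_j\|)$ is real-analytic on $U$. Consequently every entry of $\nabla^2\cL$ is a real-analytic function on $U$.

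Next I would invoke Assumption \ref{ass:rank}: there exists a point $p^\ast \in U$ at which $\nabla^2\cL$ has rank $r := nd - d(d+1)/2$. Equivalently, some $r \times r$ minor $M$ of $\nabla^2\cL$ is nonzero at $p^\ast$. The set $U$ is a connected open subset of $\R^{nd}$, since $d \geq 2$ makes it the complement of finitely many linear subspaces of codimension $\geq 2$. By the identity theorem for real-analytic functions, the nonvanishing of $M$ at $p^\ast$ implies that $M$ is not identically zero on $U$, and hence its zero set $Z := M^{-1}(0)$ is a proper real-analytic subvariety of $U$. Such a subvariety has Lebesgue measure zero and empty interior.

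Finally, on $U \setminus Z$ — which is open, dense in $U$, and of full Lebesgue measure — the Hessian admits a nonvanishing $r \times r$ minor, so $\rk \nabla^2\cL \geq r$. Combined with the upper bound $\rk \nabla^2\cL \leq r$ coming from invariance of $\cL$ under the $r$-dimensional group of rigid motions of $\R^d$ recalled in the discussion preceding the lemma, equality holds throughout $U \setminus Z$. This is precisely the claim for a generic configuration.

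The logical structure is straightforward; the main thing to verify carefully is the real-analyticity claim, since the fractional exponent $1/\a$ could a priori cause trouble. This is why the argument lives on $U$ rather than all of $\R^{nd}$: on $U$ the squared distances stay positive and $t^{1/\a}$ is analytic there. A minor additional consideration is the edge case $d=1$, where $U$ can fail to be connected; in that case one applies the identity theorem on the connected component of $U$ containing $p^\ast$, which still produces an open dense set of full measure within that component.
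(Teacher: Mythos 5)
Your proposal is correct and follows essentially the same route as the paper: the upper bound $nd-d(d+1)/2$ from rigid-motion invariance, plus real-analyticity of an $r\times r$ minor that is nonzero at the point supplied by Assumption \ref{ass:rank}, so its zero set has measure zero and the rank is exactly $r$ generically. Your version is somewhat more careful than the paper's (restricting to the open set of pairwise-distinct configurations, checking analyticity despite the fractional exponent, and noting the connectedness needed for the identity theorem), but these are refinements of the same argument rather than a different proof.
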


\begin{proof}
    For any set of points $y_1, \ldots, y_n,$ the rank of $\nabla^2\cL$ is at most $r = nd-d(d+1)/2$ since $\cL$ is invariant under all rigid transformations of $\R^d.$ Suppose the rank is equal to $r$ at $y_1, \ldots, y_n$. Then, there exists an $r\times r$ minor $m(y_1, \ldots, y_n)$ of $\nabla^2\cL$ that is non-zero at $y_1, \ldots, y_n.$ Note that $\cL$ is analytic and so this minor is also analytic. Thus, the zero set of $m(y_1, \ldots, y_n)$ is measure zero. Moreover, the zero set of all $r\times r$ minors of $\nabla^2\cL$ is measure zero and so a generic embedding does not land in this set.
    Since the rank function is lower semicontinuous, $\rank\nabla^2\cL$ may only increase in a neighbourhood of $y_1, \ldots, y_n.$ If the rank of the Hessian $\nabla^2\cL$ is equal to $nd-d(d+1)/2$ at $y_1, \ldots, y_n,$ it follows that $\rank \nabla^2\cL$ is constant in a neighbourhood of $y_1,\ldots, y_n.$ It follows that at a generic set of points, the rank of $\nabla^2\cL$ is exactly $nd-d(d+1)/2.$
\end{proof}

Note that if Assumption \ref{ass:rank} is not satisfied, then it will be satisfied for a random small perturbation of the affinities. This corresponds to a small perturbation of the perplexity of each distribution. Since t-SNE is known to be robust to changes in perplexity \cite{vandermaarten2008tsne}, this does not have a significant effect on the embedding.

\begin{lemma}
    Suppose for a set of affinities $p_{ij}$ that Assumption \ref{ass:rank} is not satisfied. Then, replacing $p_{ij}$ with $p_{ij}+\e_{ij}$ with $\e_{ij}$ sampled i.i.d from $N(0, \s^2),$ satisfies Assumption \ref{ass:rank} with probability 1.
\end{lemma}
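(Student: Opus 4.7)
The plan is to reduce the statement to showing that the set of affinities $p$ for which \Cref{ass:rank} fails is a Lebesgue null set in the (finite-dimensional) space of affinities. Once this is established, the i.i.d.\ Gaussian noise $(\e_{ij})$ has a density with respect to Lebesgue measure, so the perturbed vector $p+\e$ avoids any fixed null set with probability $1$.

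I would first observe that $\nabla^2\cL$, viewed as a matrix whose entries are functions of $(p,y)$, depends affinely on $p$. Writing $\cL = \sum p_{ij}\log p_{ij} - \sum p_{ij}\log q_{ij}$, the first sum is constant in $y$ and drops out of the Hessian, while $q_{ij}$ itself depends on $y$ alone. Using $\sum_{i\neq j} p_{ij}=1$, one obtains $\nabla^2\cL(y;p) = -\sum_{i\neq j} p_{ij}\,\nabla_y^2\log K_\a(\|y_i-y_j\|) + \nabla_y^2\log Z(y)$, where $Z(y)$ is the normaliser of the $q_{ij}$. Consequently every $r\times r$ minor of $\nabla^2\cL$, with $r:=nd-d(d+1)/2$, is a polynomial of bounded degree in $p$ whose coefficients are real-analytic functions of $y$.

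Next, fix any $y^\ast\in\R^{nd}$ and set $A_{y^\ast}:=\{p:\rank\nabla^2\cL(y^\ast;p)<r\}$. This is the common zero locus, inside the affinity space, of all $r\times r$ minors of $\nabla^2\cL(y^\ast;\cdot)$; provided that at least one of these minors is not identically zero as a polynomial in $p$, $A_{y^\ast}$ is a proper real algebraic subvariety and hence Lebesgue negligible. Because the set of $p$ for which \Cref{ass:rank} fails is contained in $A_{y^\ast}$ for every choice of $y^\ast$, that set is itself Lebesgue null, and the lemma follows from the absolute continuity of the Gaussian.

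The remaining step, which I expect to be the main obstacle, is producing a single witness pair $(p^\ast,y^\ast)$ at which $\nabla^2\cL(y^\ast;p^\ast)$ attains the maximal rank $r$, since this is precisely what ensures the minor polynomials above are not identically zero on affinity space. Heuristically, the only invariance of $\cL$ is under the $d(d+1)/2$-dimensional rigid-motion group of $\R^d$, so in sufficiently generic configurations the Hessian should have no additional null vectors. To make this rigorous one can take $y^\ast$ to consist of $n$ points in affinely general position (for instance a small generic perturbation of a regular simplex) and $p^\ast$ to be a small generic perturbation of the uniform affinities, and then verify, either by direct computation or by a group-action dimension count ruling out accidental symmetries of the kind described in the preceding rank-deficient-Hessian proposition, that $\ker\nabla^2\cL(y^\ast;p^\ast)$ consists of exactly the infinitesimal rigid motions.
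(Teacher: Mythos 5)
Your proposal follows essentially the same route as the paper: the entries of $\nabla^2\cL$ are affine in the affinities, so rank deficiency below $r=nd-d(d+1)/2$ is cut out by polynomial (minor) equations in $p$, whose common zero set is Lebesgue null and is therefore avoided with probability $1$ by the absolutely continuous Gaussian perturbation. The one step you single out as outstanding---exhibiting a witness $(p^\ast,y^\ast)$ at which some $r\times r$ minor is not identically zero in $p$---is also left implicit in the paper, whose proof simply asserts that the polynomial constraint set has measure zero; your write-up is, if anything, more explicit about where the genuine content of that measure-zero claim lies.
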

\begin{proof}
    Since $p_{ij}$ is not a function of the embedding $y_1, \ldots, y_n,$ each entry of $\nabla^2\cL$ is linear in $p_{ij}.$ Note that $\nabla^2\cL$ has rank strictly less than $nd-d(d+1)/2$ if and only if all of its $nd-d(d+1)/2$ minors are equal to 0. The set of all $p_{ij}$ satisfying these polynomial constraints is measure zero. Thus, any random perturbation of the affinities will not land in this set with probability 1.
\end{proof}

\subsection{Proof of Theorem.} We now show that, in the setting of Assumption \ref{ass:setting}, the map $F:\R^{1+nd}\to\R^{nd}$ is smooth and has constant rank $nd-d(d+1)/2$ in some neighbourhood of a generic point $(\a, y_1, \ldots, y_n)\in\R^{1+nd}.$

\begin{lemma}\label{lem:hypothesis}
    The map $F:\R^{1+nd}\to\R^{nd}$ is smooth and has constant rank $nd-d(d+1)/2$ around a generic point $(\a, y_1, \ldots, y_n).$
\end{lemma}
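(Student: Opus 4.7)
The plan is to verify, in turn, smoothness of $F$, a pointwise rank identity at the generic point, and an upgrade to constant rank on a neighborhood. Smoothness is essentially bookkeeping: Assumption~\ref{ass:data} fixes the dataset, Assumption~\ref{ass:perp} makes $\cP(\a)$ smooth in $\a$, and the perplexity-to-variance equation is strictly monotone in $\sigma_i$, so the implicit function theorem gives each $\sigma_i\in C^\infty(\a)$ and hence each $p_{ij}\in C^\infty(\a)$. The kernel $K_\a(d)=(1+d^{2/\a})^{-\a}$ is manifestly smooth for $\a>0$, and $F=\nabla_y\cL$ is assembled from these smooth pieces.

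For the pointwise rank, I would write the Jacobian as the block $J=[\partial_\a F\mid\nabla^2\cL]$, an $nd\times(1+nd)$ matrix. At a generic zero of $F$, Assumption~\ref{ass:rank} together with the preceding lemma gives $\rk(\nabla^2\cL)=r:=nd-d(d+1)/2$, with kernel equal to the tangent space $\mathfrak{g}\cdot y$ of the rigid-motion orbit through $y$. The crux is to show that $\partial_\a F$ lies in the column space of $\nabla^2\cL$, so that appending the $\a$-column does not raise the rank. This follows from $G$-invariance of $\cL$: for every Lie-algebra generator $V\in\mathfrak{g}$ and every $(\a,y)$ one has the identity $V(y)\cdot\nabla_y\cL(\a,y)=0$; differentiating in $\a$ with $y$ held fixed (so that $V(y)$ is treated as a constant vector in $y$-space) yields $V(y)\cdot\partial_\a F(\a,y)=0$. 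Hence $\partial_\a F\perp\mathfrak{g}\cdot y$, and since $\nabla^2\cL$ is symmetric its column space equals $(\ker)^\perp=(\mathfrak{g}\cdot y)^\perp$, which contains $\partial_\a F$. Therefore $\rk(J)=r$ at the point.

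The main obstacle, which I would address last, is upgrading this to constant rank on a full neighborhood of $(\a,y_1,\dots,y_n)$ in $\R^{1+nd}$: off the zero set the kernel of $\nabla^2\cL$ can shrink from the full orbit tangent to just the translation subspace (the argument above only forces rotation generators into the kernel once $\nabla\cL=0$), so $\nabla^2\cL$ itself need not have constant rank on a neighborhood. To obtain a genuine constant-rank setup I would pass to a reduced map. Because $F$ is the gradient of a $G$-invariant function, $F(\a,y)\in(\mathfrak{g}\cdot y)^\perp$ for every $(\a,y)$; choose a smooth local frame $W_1(y),\dots,W_r(y)$ spanning this $r$-dimensional subspace pointwise on a neighborhood of $y^\ast$ and define $c_j(\a,y):=\langle W_j(y),F(\a,y)\rangle$. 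Locally $\{F=0\}=\{c=0\}$, and the Jacobian of $c\colon\R^{1+nd}\to\R^r$ has rank $r$ at the generic point by the computation above. Combining lower semicontinuity of rank with the trivial upper bound $\rk(c)\le r=\dim(\text{codomain})$ forces constant rank $r$ on a neighborhood, and the constant-rank level-set theorem applied to $c$ realises the common zero set as a properly embedded submanifold of $\R^{1+nd}$ of codimension $r$, i.e., of dimension $1+d(d+1)/2$, as required.
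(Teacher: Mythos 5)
Your proposal is correct, and at the one genuinely delicate step it takes a different (and more careful) route than the paper. The smoothness part coincides with the paper's argument: both reduce to smoothness of the bandwidths $\s_i$ in $\a$ via the implicit function theorem applied to the perplexity equations, whose Jacobian in $(\s_1,\ldots,\s_n)$ is diagonal; you invoke strict monotonicity of perplexity in $\s_i$ where the paper settles for a genericity statement, which is a harmless (arguably cleaner) variation. For the rank, the paper simply asserts that, because of rigid-motion invariance, $\rk\nabla F\leq nd-d(d+1)/2$ and then combines this upper bound with lower semicontinuity to get constant rank on a neighbourhood. As you correctly observe, that upper bound is only automatic on the zero set of $F$: differentiating the invariance identity $\langle V(y),F(\a,y)\rangle=0$ only forces the rotation generators into $\ker\nabla^2\cL$ where $\nabla\cL=0$, so off the critical set the rank of $\nabla F$ can exceed $nd-d(d+1)/2$ and the paper's ``constant rank in a full neighbourhood'' conclusion does not follow as written. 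Your workaround --- showing $\partial_\a F\in(\mathfrak{g}\cdot y)^\perp=\operatorname{col}(\nabla^2\cL)$ at the generic zero so that $\rk\nabla F=nd-d(d+1)/2$ there, then replacing $F$ by the reduced map $c=(\langle W_j(y),F\rangle)_{j=1}^{r}$ with values in $\R^{r}$, which is a submersion at the point and hence of constant (maximal) rank nearby --- repairs this and delivers exactly the conclusion the main theorem needs: locally $\{F=0\}=\{c=0\}$ is a properly embedded submanifold of dimension $1+d(d+1)/2$. The trade-off is that you do not literally prove the lemma as stated (constant rank of $F$ itself on a neighbourhood of the point, which your own discussion suggests may in fact fail for $d\geq 2$); you prove a substitute that is sufficient for, and strengthens, its only use. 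Two small points you lean on implicitly and could state: the containment $\mathfrak{g}\cdot y\subseteq\ker\nabla^2\cL$ at critical points (needed to identify the kernel with the orbit tangent space, via differentiating the equivariance of $\nabla\cL$), and the fact that at a generic configuration the orbit tangent space has full dimension $d(d+1)/2$, which also guarantees the local smooth frame $W_1,\ldots,W_r$ exists.
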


\begin{proof}
    Recall that the partial derivative of $\cL$ with respect to $y_i$ is given by
\begin{equation*}
    \frac{\partial\cL}{\partial y_i} = 4\sum_{j\neq i}(p_{ij}-q_{ij})K_\a(\|y_i-y_j\|)^{1/\a}(y_i-y_j).
\end{equation*}
Assuming that $y_i\neq y_j$ for all $i\neq j,$ note that $q_{ij}$ and $K_\a(\|y_i-y_j\|)$ are smooth functions of $y_1, \ldots, y_n.$ It follows that $\frac{\partial\cL}{\partial y_i}$ is a smooth function of $y_1, \ldots, y_n.$ We now justify that $\frac{\partial\cL}{\partial y_i}$ is a smooth function of $\a.$ Assuming that $\a\neq 0,$ it is clear that $q_{ij}$ and $K_\a(\|y_i-y_j\|)^{1/\a}$ are smooth functions of $\a.$ Thus, we need only justify that $p_{ij}$ is a smooth function of $\a.$
Recall that the affinities $p_{ij}$ are smooth functions of the bandwidths $\s_i$ chosen to assure that the distribution around each data point has the perplexity $\cP(\a).$ Note that it is sufficient to prove that the bandwidths $\s_i$ are smooth functions of $\a.$
By assumption, the perplexity $\cP(\a)$ is a smooth function of $\a.$ Consider the function $G:\R^{1+n}\to\R^n$ defined by
\begin{equation*}
    G(\a, \s_1, \ldots, \s_n) = \begin{bmatrix}
        \cP(\a) - \exp(-\log 2 \sum_{j\neq i}p_{j|i}\log_2 p_{j|i})
    \end{bmatrix}_{i=1}^n
\end{equation*}
where we view each directional affinity $p_{j|i}$ as a smooth function of the bandwidth $\s_i.$ The set of parameters and bandwidths used to find the tree-SNE embeddings are in the zero set of $G.$ 
It is clear the $G$ is a smooth function. Suppose that for a given $\a^*,$ $\s_1^*,\ldots, \s_n^*,$ the equality $G(\a^*, \s_1^*, \ldots, \s_n^*) = 0$ holds. Then, by the implicit function theorem, $\s_1, \ldots, \s_n$ are a continuous function of $\a$ in a neighbourhood of $(\a^*, \s_1^*, \ldots, \s_n^*)$ if the matrix
\begin{align*}
   J= \begin{bmatrix}
        \frac{\partial}{\partial\s_k}( \cP(\a) - \exp(-\log 2 \sum_{j\neq i}p_{j|i}\log_2 p_{j|i}))
    \end{bmatrix}_{i,k=1}^n
\end{align*}
is invertible. Note that $J$ is diagonal and so is not invertible whenever the equality $$\frac{\partial}{\partial\s_i}( \cP(\a) - \exp(-\log 2 \sum_{j\neq i}p_{j|i}\log_2 p_{j|i}))=0$$ for some value of $i.$ Generically, the equality does not hold and so the implicit function theorem applies. It follows that $F$ is a smooth function as desired. 
We now show that $F$ has constant rank $nd-d(d+1)/2$ in some neighbourhood of a generic point $(\a, y_1, \ldots, y_n).$ Recall that the rank of $F$ is the rank of its Jacobian $\nabla F.$ The Jacobian $\nabla F$ is given by
\begin{align*}
    \nabla F = \begin{bmatrix}
        \frac{\partial^2\cL}{\partial y_1\partial\a} & \ldots & \frac{\partial^2\cL}{\partial y_n\partial\a} \\
        & \nabla^2 \cL &
    \end{bmatrix}
\end{align*}
and so the rank of $F$ is at least the rank of the Hessian $\nabla\cL^2.$ Note that $\nabla F$ is an $(nd+1)\times nd$ matrix and so its rank cannot exceed $nd.$ However, like $\cL,$ the function $F$ is invariant under all rigid transformations of $\R^d.$ The space of all rigid transformations of $\R^d$ is $d(d+1)/2-$dimensional and so it follows that $\rank\nabla F\leq nd - d(d+1)/2.$ By assumption, $\nabla^2\cL$ generically has rank $nd-d(d+1)/2$ and so we must have that $\rank\nabla F = nd-d(d+1)/2.$ By the lower semicontinuity of rank, $F$ has constant rank of $nd-d(d+1)/2$ in a neighbourhood of a generic point, as desired.
\end{proof}

We now apply the constant-rank level-set theorem to prove our main theorem.

\subsubsection*{Proof}

Choose a generic point $(\a, y_1, \ldots, y_n)$ in the zero set of $F.$ Then, as shown in \Cref{lem:hypothesis}, the hypothesis of the constant-rank level-set theorem is satisfied. In particular, we have that the zero-set of $F$ around $(\a, y_1, \ldots, y_n)$ is a properly embedded, $1+d(d+1)/2$ dimensional submanifold of $\R^{1+nd}.$ 
It remains to argue that the parameter $\a$ is not constant in this neighbourhood. Indeed, since $F$ is invariant under all rigid transformations of $\R^d,$ any level set of $F$ is at least $d(d+1)/2$ dimensional. Suppose that $\a$ is constant in this neighbourhood. Then, there exists some nontrivial vector $(v_1, \ldots, v_n)^\T\in \R^{nd}$ such that $(\a, y_1+v_1, \ldots, y_n+v_n)=0.$ However, we would then have that $(v_1, \ldots, v_n)^\T$ is in the nullspace of $\nabla^2\cL.$ Since $\nabla^2\cL$ has rank $nd-d(d+1)/2$ by assumption, this is a contradiction. It follows that $\a$ is not constant, as desired. \qed

\section{Examples}\label{sec:examples}

W use tree-SNE to visualize two high-dimensional datasets constrcuted from the HathiTrust Library. The full HathiTrust dataset consists of 13.6 million works described by millions of features that correspond to word counts on each page of the work. We use the preprocessed dataset from \cite{Schmidt2018Stable} that reduces each work to a 100-dimensional vector.  For each example, we generate a tree-SNE embedding using the default parameters described in \cite{robinson2020tree-sne}. Each layer of the embeddings was found using FIt-SNE \cite{Linderman2019fitsne} with default parameters. Code to reproduce these examples, as well as the MNIST example seen in \Cref{sec:intro}, can be found at \url{https://github.com/j4ck-k/tree-sne}.

\begin{figure}[H]
    \includegraphics[width=0.59\textwidth]{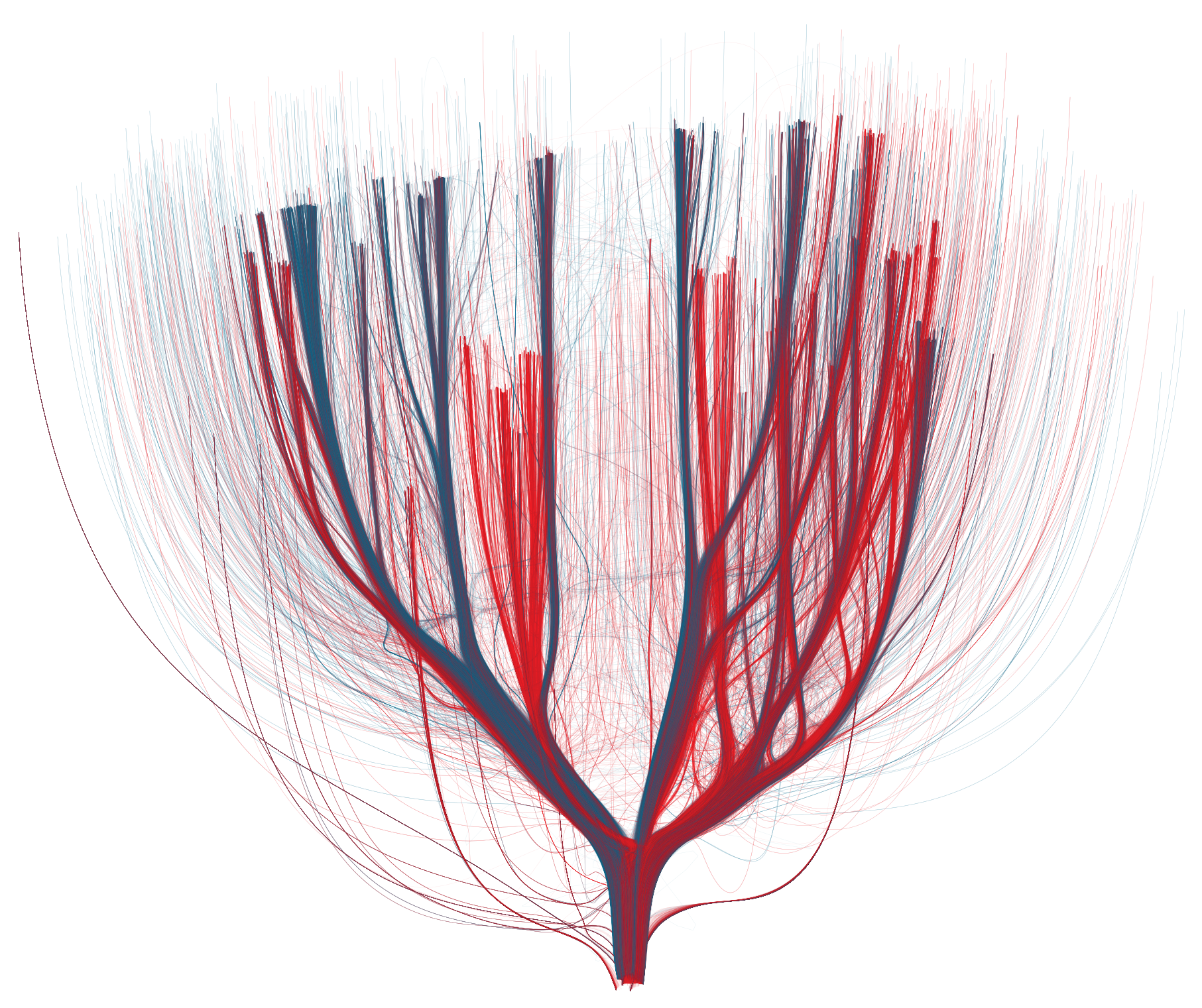}
    \includegraphics[width=0.4\textwidth]{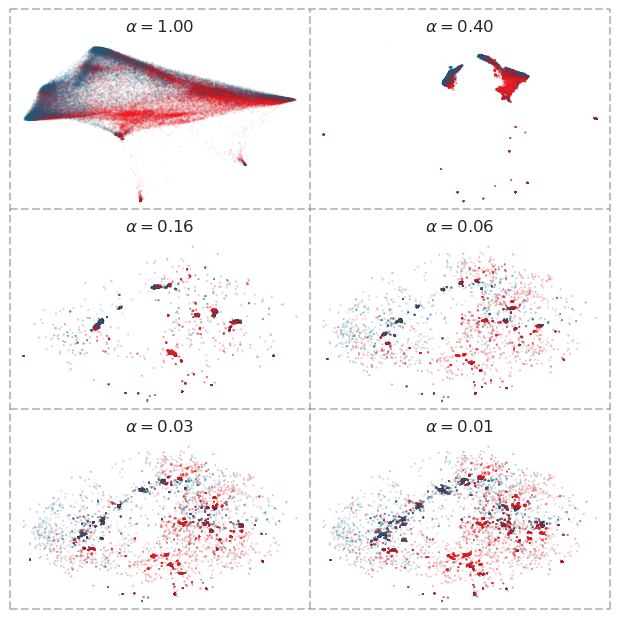}
    \caption{Tree-SNE embedding of 100,000 works of American and English literature. The red points correspond to English literature and blue points correspond to American literature.}
    \label{fig:hathitrust-tree}
\end{figure}

\subsection{English and American Literature}\label{sec:hathitrust}

For our first example, we use tree-SNE to visualize a set of 100,000 works of English and American literature from the HathiTrust library dataset. This dataset was constructed from the full HathiTrust dataset by filetering works classified as either PR (the Library of Congress code for English literature) and PS (the Library of Congress code for American literature) and taking the first 100,000 works for computational simplicity and efficiency. Our dataset is approximately 55\% English and 45\% American literature.

We generates a 50 layer embedding with degrees of freedom between 1 and 0.01. Figure~\ref{fig:hathitrust-tree} shows the tree-SNE embedding of the dataset along with slices of several layers. Points are color coded depending on Library of Congress code. In each plot, red points are works classified as PR and blue points are works classified as PS. When $\a=1$ we see that, although the two classifications are being pushed to opposite sides of the embedding, there is no separation between works of English and American literature. Clusters begin to appear in subsequent layers of the plot.

To illustrate how the tree-SNE embedding captures the evolution of clusters within the data, we track the paths of 109 works that form a single cluster roughly halfway into the tree-SNE embedding. This cluster was chosen because it contains a mix of American (80 works) and English (29 works) literature and its relatively small size allows us to easily examine the works that appear as well as subclusters that form within the data. The trajectories of these works, as well as slices showing the evolution of the cluster throughout the embedding, are shown in \Cref{fig:cluster-trajectory}. 

\begin{figure}[h]
    \includegraphics[width=0.59\textwidth]{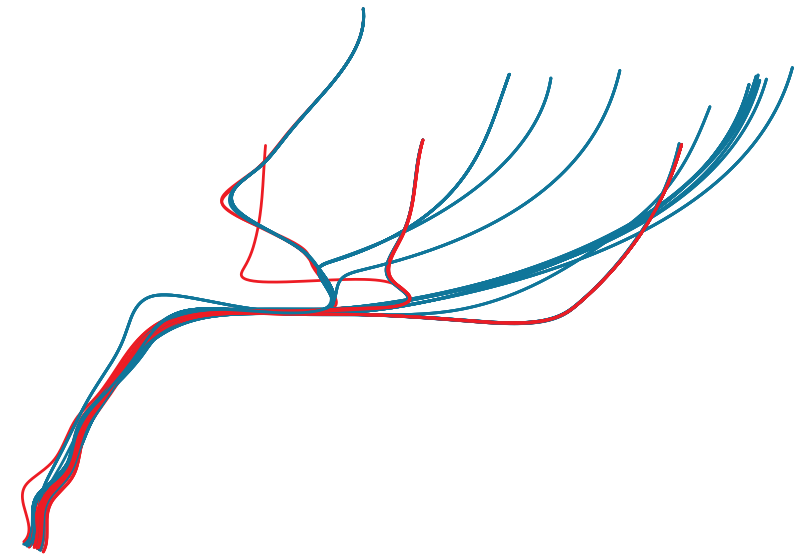}
    \includegraphics[width=0.4\textwidth]{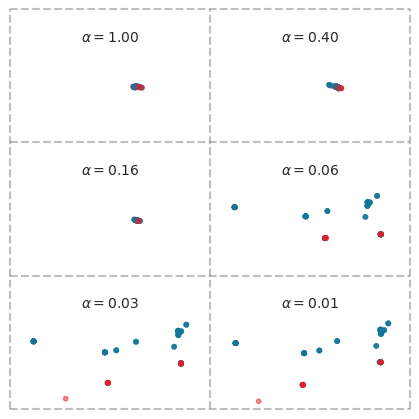}
    \caption{The trajectories of 109 works in the tree-SNE embedding in \Cref{fig:hathitrust-tree}. These works form a cluster in the tree approximately halfway through the embedding and subsequently split into more informative subclusters towards the top of the tree.}
    \label{fig:cluster-trajectory}
\end{figure}

The 109 works in this cluster are mostly pieces of travel writing and journals by writers from the 19th century. Given that these works center a common theme, a macro-level cluster is expected. We see that as the cluster evolves within the tree, additional branches begin to appear. Notably, in the final layer of the embedding, all except one subcluster is formed entirely of either English or American literature. The one exception is a cluster containing 13 works of English literature and 18 works of American literature. However, the American literature in this subcluster consists entirely of travel writings from the novelist Nathaniel Hawthorne during his time in the United Kingdom and continental Europe.

\subsection{The Anti-Stratfordian Theory} Although it is widely accepted that the plays and poems attributed to Shakespeare were indeed written by William Shakespeare of Stratford-upon-Avon, there is a fringe theory amongst literary historians that these works were produced either by another author or a group of writers. This is known as the Anti-Stratfordian Theory. Motivated by the cluster formed by Shakespeare and his contemporaries in \Cref{fig:hathitrust-tree}, we apply tree-SNE to explore the Shakespearean authorship question. 

Two prominent alternative candidates for Shakespearean authorship are the philospher Sir Francis Bacon and the playwright Christopher Marlowe. We constructed a dataset of 2419 English language works by Shakespeare (68 works), Bacon (87 works), Marlowe (114 works), and additional contemporaries from the HathiTrust database. Many of these contemporaries have also been proposed as candidate authors for Shakespeare's work yet are not as popular as Marlowe and Bacon. Since this dataset is relatively small, we generate a tree-SNE embedding with only 10 layers corresponding to degrees of freedom between 1 and 0.25. The embedding and slices of three layers are in \Cref{fig:shakespeare-tree}.

\begin{figure}[h]
    \includegraphics[width=0.7\textwidth]{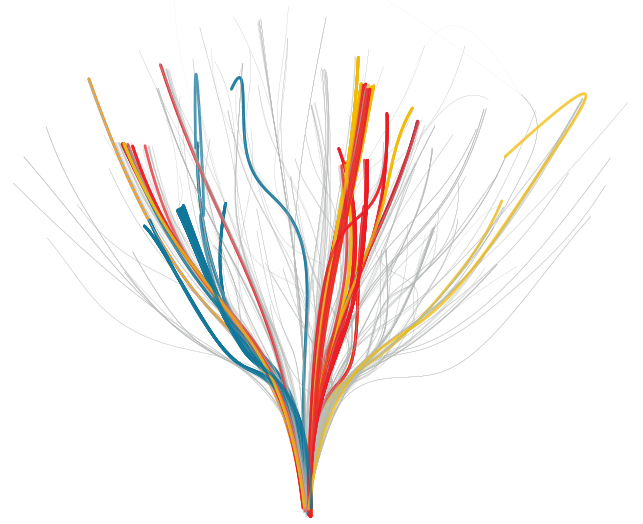}
    \includegraphics[width=0.2\textwidth]{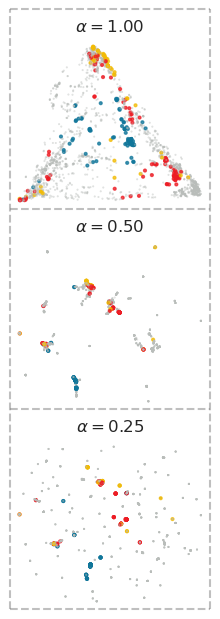}
    \caption{Tree-SNE embedding of Shakespeare and contemporaries. Gold points are works attributed to Shakespeare, red points are works by Marlowe, and blue points are works by Bacon.}
    \label{fig:shakespeare-tree}
\end{figure}

We applied DBSCAN to various layers in the embedding to determine clusters within the data. In the initial t-SNE embedding, there is little separation between any of the included authors, although we see that works by Bacon are more central and works by Marlowe and Shakespeare are pushed towards the corners of the embedding. In particular, the works of Shakespeare are pushed to the north whereas works of Marlowe are scattered throughout. As the layers of the embedding increase, we see that clusters consisting of works attributed to a single author start to appear. In the final layer of the embedding, we see that there are two larger clusters containing works of Shakespeare. More than half of the works attributed to Shakespeare appear in one large cluster of 304 works. This cluster mostly consists of plays and many authors including Marlowe, but not Bacon, are represented. Other than Shakespeare, ten authors (including Marlowe) appear at least ten times in this cluster. Bacon and Shakespeare only appear together in one cluster that contains 6 works of Bacon, 7 works of Shakespeare, and 256 works total. Thus, the vast majority of the works attributed to these authors do not appear in the same clusters within the data. Within the confines of this small experiment, it appears to be less likely that Sir Francis Bacon is the true author of the works attributed to Shakespeare than Christopher Marlowe although there is little to no evidence that Marlowe was the true author. 

\section{Conclusion}
We have successfully shown that the tree-SNE procedure introduced in \cite{robinson2020tree-sne} generates continuous structures that produce finer and finer clusterings within high-dimensional data. Moreover, by creating tree-SNE embeddings of three high-dimensional datasets, we observed that these clusterings are indeed meaningful and elucidate both macro- and micro-level structures within data. We conclude by describing two interesting directions for further research:
\begin{enumerate}
    \item The tree-SNE procedure can conceivably be adapted to other attraction-repulsion methods to produce trees of other embeddings. We leave the structure of these trees as an interesting direction to explore.
    \item While we have shown that tree-SNE embeddings have continuous strucutre, it is yet unclear why and when different branches form within the tree. To fully understand this, a greater understanding of the inner workings of t-SNE is needed. An interesting question is to uncover these threshold degrees of freedom for highly structured data, such as points sampled from a mixture of mixtures of Gaussians.
\end{enumerate}

\subsection*{Acknowledgements.}
The author is grateful to Stefan Steinerberger for introducing him to this problem and for providing valuable guidance throughout the creation of this manuscript.

\printbibliography
\end{document}